\documentclass{article} 
\usepackage{iclr2023_conference,times}


\usepackage{amsmath,amsfonts,bm}









\def\eqref#1{equation~\ref{#1}}









\def\1{\bm{1}}










\DeclareMathAlphabet{\mathsfit}{\encodingdefault}{\sfdefault}{m}{sl}
\SetMathAlphabet{\mathsfit}{bold}{\encodingdefault}{\sfdefault}{bx}{n}













\usepackage{url}
\usepackage{amsmath}
\usepackage{amssymb}
\usepackage{mathtools}
\usepackage{mathrsfs}
\usepackage{amsthm}
\usepackage{microtype}
\usepackage{graphicx}
\usepackage{subfigure,wrapfig}
\usepackage{multirow}
\usepackage{booktabs} 
\usepackage{tablefootnote}
\usepackage{enumitem}
\usepackage{wrapfig}
\usepackage[normalem]{ulem}
\usepackage[page,header]{appendix}
\usepackage{titletoc}
\usepackage{hyperref}
\usepackage{xspace}
\usepackage[nameinlink,capitalize]{cleveref}
\crefname{section}{\S}{\S}
\Crefname{section}{\S}{\S}
\crefname{appendix}{App.}{Apps.}
\Crefname{appendix}{App.}{Apps.}
\crefname{theorem}{Thm.}{Thms.}
\Crefname{theorem}{Thm.}{Thms.}
\crefname{proposition}{Prop.}{Props.}
\Crefname{proposition}{Prop.}{Props.}
\crefname{assumption}{Assumption}{Assumptions}
\Crefname{assumption}{Assumption}{Assumptions}
\crefname{algorithm}{Algorithm}{Algorithms}
\Crefname{algorithm}{Algorithm}{Algorithms}
\crefname{tabular}{Table}{Tables}
\Crefname{tabular}{Table}{Tables}
\newcommand{\dist}{FreD}



\theoremstyle{plain}
\newtheorem{theorem}{Theorem}[section]
\newtheorem{proposition}[theorem]{Proposition}

\theoremstyle{definition}
\newtheorem{definition}[theorem]{Definition}

\theoremstyle{remark}

\title{Privately Customizing Prefinetuning to Better Match User Data in Federated Learning}


\author{Charlie Hou \thanks{Work done while interning at Meta.} \\
Department of Electrical and Computer Engineering \\
Carnegie Mellon University \\
\texttt{charlieh@andrew.cmu.edu} \\
\AND
Hongyuan Zhan, Akshat Shrivastava, Sid Wang, Aleksandr Livshits \\
Meta \\
\texttt{\{hyzhan, akshats, yuwang2020, alll\}@meta.com}
\AND
Giulia Fanti \\
Department of Electrical and Computer Engineering \\
Carnegie Mellon University \\
\texttt{gfanti@andrew.cmu.edu} \\
\And
Daniel Lazar \\
Meta \\
\texttt{dlazar@meta.com} \\}

%

\iclrfinalcopy 
\begin{document}

\maketitle

\begin{abstract}
In Federated Learning (FL), accessing private client data incurs communication and privacy costs. 
As a result, FL deployments commonly \emph{prefinetune} \citep{aghajanyan2021muppet} pre-trained foundation models on a (large, possibly public) dataset that is held by the central server; they then \emph{FL-finetune} the model on a private, federated dataset held by clients \citep{nguyen2022begin}.  Evaluating prefinetuning dataset quality reliably and privately is therefore of high importance.  
To this end, we propose \dist~(Federated Private Fréchet Distance) --- a \textit{privately} computed distance between a prefinetuning dataset and federated datasets. 
Intuitively, it privately computes and compares a Fr\'{e}chet distance between embeddings generated by a large language model on both the central (public) dataset and the federated private client data. 
To make this computation privacy-preserving,  we use distributed, differentially-private mean and covariance estimators. 
We show empirically that \dist{} accurately predicts the best prefinetuning dataset at minimal privacy cost. 
Altogether, using \dist{} we demonstrate a proof-of-concept for a new approach in private FL training: (1) customize a prefinetuning dataset to better match user data (2) prefinetune (3) perform FL-finetuning.
\end{abstract}

\section{Introduction}

Federated Learning (FL) is a framework in which a central server learns a model from data that is distributed across a set of clients, without directly accessing that data \citep{mcmahan2017communication, kairouz2021advances}. 
One of the main motivations for FL is privacy: an early hope was that by not accessing client data directly, the central server would learn less about it, thereby protecting client privacy.
However, this intuition can be broken under naive implementations of FL \citep{carlini2021extracting, shokri2017membership}; to achieve meaningful privacy, one needs provably private training mechanisms, e.g., using differential privacy (DP) \cite{dwork2006differential,abadi2016deep}. 

Despite its privacy benefits, DP training of FL models incurs high utility costs. 
For example, in the widely-used DP stochastic gradient descent (DP-SGD) \cite{abadi2016deep}, to achieve reasonable privacy guarantees, models can only be trained for a limited number of rounds \cite{abadi2016deep}.

To get around this challenge \textit{under high privacy requirements where the number of FL training rounds is scarce}, a common approach is to ``prefinetune'' FL models \citep{nguyen2022begin}. That is, given a pretrained foundation model (e.g., BERT),  finetune it centrally on a dataset that is either public or owned by the FL coordinator, without privacy. The resulting \emph{prefinetuned} model
is used to initialize the federated model, which is sent to all clients and trained with private optimization. Prefinetuning helps the finetuning require fewer training steps, thereby boosting privacy guarantees. In this paper we use the term `FL-finetuning' to refer to finetuning on federated datasets.

While FL-finetuning is widely used today \citep{nguyen2022begin}, a crucial factor for its success is the choice of prefinetuning dataset.
For example, when training a large language model (LLM), one could prefinetune on a number of public datasets---e.g., Reddit \citep{caldas2018leaf} or StackOverflow \citep{reddi2020adaptive}. The efficacy of pre-training will ultimately depend on how closely the prefinetuning dataset represents the true, private data \citep{gu2022choosing,tramer2022considerations}.

Although prefinetuning dataset selection is critical to the success of FL finetuning, we lack algorithms to methodically select prefinetuning datasets, particularly in the FL setting (i.e., distributed private dataset under privacy constraints).

\begin{wrapfigure}{l}{0.55\textwidth}
    \centering
    \includegraphics[width=0.55\textwidth]{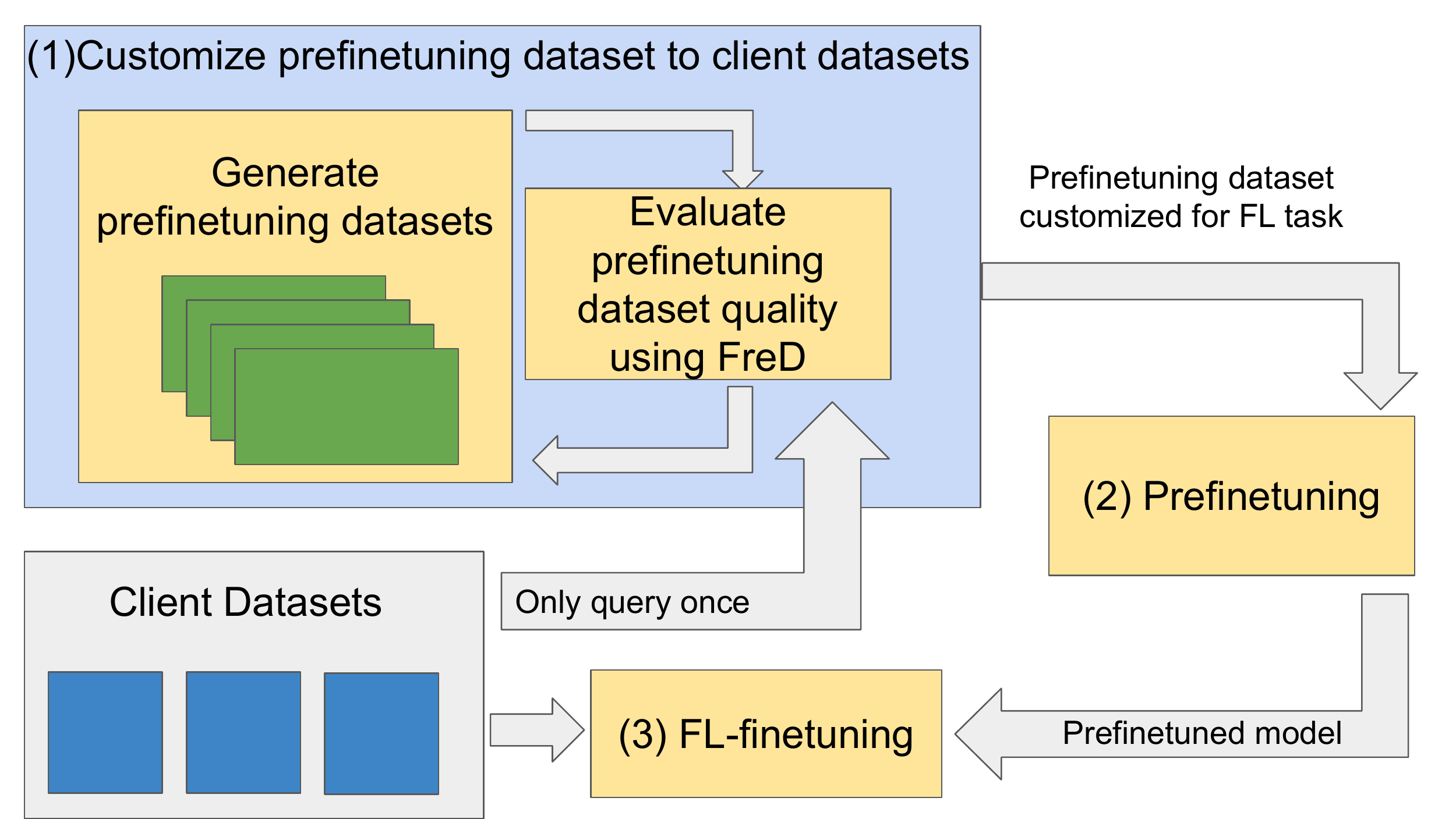}
    \caption{Proposed prefinetuning dataset customization approach using \dist{} to evaluate closeness of prefinetuning dataset to client data.  We demonstrate that it is possible to privately and repeatedly evaluate prefinetuing dataset quality using \dist{} in step (1), and demonstrate the end-to-end approach experimentally.}
    \label{fig:customization}
    \vspace{-0.5cm}
\end{wrapfigure}
Our work aims to fill this gap. 
Concurrent work \citep{gu2022choosing} tackled a similar problem in the centralized setting; this method first computes two low-dimensional subspaces from the per-sample gradients for a small batch of public and private data, respectively.
The dataset distance is measured as a projection metric between the two subspaces \cite{edelman1998geometry}. 
The algorithm in \citep{gu2022choosing} is designed for the centralized setting, and it is unclear how to apply it to a distributed dataset---particularly under the constraint that the distance measurement itself must be differentially private
\citep{papernot2021hyperparameter}. 

In this work, we instead modify a well-known dataset distance metric: the Fréchet Distance, most commonly used to measure dataset distances between synthetic generative adversarial network (GAN) and real datasets \citep{heusel2017gans} and occasionally used to measure distances between language datasets \citep{bertdistance}.  We demonstrate that a suitably chosen private formulation of the Fréchet Distance allows practitioners to accurately evaluate the quality of any number of prefinetuning datasets with respect to an FL task, with minimal privacy cost.  Therefore, given a dataset generation mechanism that can take feedback from our private Fréchet Distance (this can be as simple as choosing between already existing datasets, or could be as sophisticated as using an LLM to generate datasets) one can effectively customize their prefinetuning dataset for the FL task.  As a consequence, we make the first proof-of-concept for a new three-stage approach in privacy-sensitive FL applications \cref{fig:customization}: (1) customize the prefinetuning dataset to match client datasets (2) prefinetuning (3) FL-finetuning.

\paragraph{Contributions}
\begin{itemize}[leftmargin=*]
    \item We present \dist, a privacy-preserving metric based on Fréchet Distance to measure dataset distance between prefinetuning and FL-finetuning datasets.\footnote{Although \dist{} works well on non-federated settings, we focus on the federated setting in this work.} We show that \dist{} satisfies a formal $(\epsilon,\delta)$-differential privacy guarantee with respect to the private dataset. (\cref{thm:noisescale})
    \item \dist~computed with little privacy loss $(\epsilon = 0.6, \delta = 2 \times 10^{-6})$ empirically maintains enough resolution to accurately distinguish prefinetuning datasets that are only 1\% different. (\cref{fig:distmonotone})
    \item We show that a smaller \dist{} between the prefinetuning dataset and the FL-finetuning dataset leads to better FL-finetuning performance, measured in terms of test perplexity (Table \ref{tab:privatefinetune}).
    \item Taken together, we are the first to privately customize prefinetuning to better match user data for better FL model performance.  In our view, this is the main contribution of our work.

\end{itemize}


\section{Preliminaries}
We begin by defining differential privacy and introducing the Fréchet distance.  

\begin{definition}[Neighboring datasets]
    Two datasets $X, X'$ are said to be neighboring  (denoted $X\sim X'$) if they differ in a single record. Note that we consider a \emph{sample-level} notion of neighboring datasets; we allow one sample from one client to be added or removed.
\end{definition}

\begin{definition}[Differential Privacy]
A randomized algorithm $\mathcal{A}$ is $(\epsilon, \delta)$-differentially private (DP) if for any pair of neighboring datasets $X$, $X'$ and for all subsets $E$ of outputs, we have
\begin{align}
\text{Pr}[\mathcal{A}(X) \in E] \leq e^{\epsilon}~\text{Pr}[\mathcal{A}(X') \in E] + \delta.
\end{align}
\end{definition}
In this work, we will use a known DP mechanism called the Gaussian Mechanism \cite{dwork2006differential}, which adds Gaussian noise of a specific scale to protect privacy. 
To specify the scale, we must characterize the \emph{sensitivity} of the statistical query we wish to release. 
\begin{definition}[$\ell_2$ sensitivity \citep{dwork2014analyze}]
Let $g: X \to \mathbb{R}^p$ be a vector-valued function operating on datasets.  Let $X, X'$ be neighboring datasets.  The $\ell_2$-sensitivity of $g$
is defined as 
$\Delta g \triangleq \max_{X\sim X'}\|g(X) - g(X')\|_2$.
\end{definition}

When $X$ is a set of real-valued vectors (which is the case in our paper), bounding the $\ell_2$ sensitivity of some $g$ often requires an upper bound on the $\ell_2$ norm of the vectors in $X$. 
 The usual strategy to enforce this property for inputs to a DP mechanism is called \textit{clipping}:
 \begin{definition}[Clipping]
    We define the clipping operation $\chi_c : \mathbb{R}^d \to \mathbb{R}^d$ mapping a vector $v$ to a clipped version with $\ell_2$ norm at most $c$ to be
    $
        \chi_c(v) \triangleq v / \text{max}(1, \frac{\|v\|_2}{c}).
    $
    Suppose $E$ is some dataset of vectors of dimension $d$.  We will overload notation by letting $\chi_c(E)$ be the dataset $E$ where all its vectors have had the clipping operation $\chi_c$ performed on them.
 \end{definition}

\paragraph{DP Threat Model}  Our goal is to design DP algorithms in the high privacy regime (i.e. scarce FL training rounds) to defend against an adversary that accesses data at the central server.  
We assume the adversary has access to all intermediate quantities revealed to the central server.
Our algorithm for computing \dist{} therefore relies in part on secure aggregation \citep{bonawitz2017secureagg}, which allows the server to obtain a summary (e.g., sum) of the client data without access to individual client information. Because the central server only obtains a summary, 
the scale of noise required by the Gaussian Mechanism to achieve the same overall privacy is greatly reduced. 

\paragraph{Fréchet distance}
The Fréchet distance is a distance metric over probability distributions. 
For two probability measures $\eta$ and $\nu$ defined over $\mathbb R^n$, their Fréchet distance is defined as follows:
$$
d(\eta,\nu) \triangleq \left (\inf_{\gamma \in \Gamma(\eta,\nu)} \int ||x-y||^2 d\gamma(x,y) \right )^{1/2},
$$
where $\Gamma(\eta,\nu)$ is the set of all couplings of $\eta$ and $\nu$ (i.e. the set of all distributions $\gamma$ such that $\eta(x) = \int \gamma(x,y) dx$ and $\nu(y) = \int \gamma(x,y) dy$).
In the special case where $\eta$ and $\nu$ are Gaussian distributions $\mathcal N(\mu_1,\Sigma_1)$ and $\mathcal N(\mu_2,\Sigma_2)$, respectively, this can be written in closed form \citep{dowson1982frechet}: 
\begin{align}
    d(\eta,\nu) = ||\mu_1-\mu_2||^2_2 + \text{Tr} \left (\Sigma_1 + \Sigma_2 - 2 \left (\Sigma_1 \Sigma_2  \right )^{1/2} \right ).
    \label{eq:frechet}
\end{align}
Fréchet distance has been used in the GAN literature to evaluate the distance between synthetic and real datasets, using the Fréchet Inception Distance (FID) \cite{heusel2017gans}. 
The core idea of FID is to first extract representations of real and synthetic samples from the deepest hidden layer of a pre-trained Inception v3 model.
Then, treating those representations as multivariate Gaussians, one estimates the empirical mean and covariance of each set.
Finally, the distance between the two estimated distributions is computed using Fréchet distance (\eqref{eq:frechet}).  

While FID \citep{heusel2017gans} is used for images, it also maintains semantic closeness in the case of language.  \citet{bertdistance} use BERT as the embedder and show that a smaller Fréchet distance corresponds to human notions of language dataset closeness.  In this paper we use ALBERT, which is smaller and therefore better suited for FL. We adapt the Fréchet Distance to the space of training federated language models in \dist{}, where privacy constraints are critical. 



\begin{algorithm}[tb]
    \caption{\dist}
    \label{algo:distance}
    \begin{algorithmic}
        \STATE {\bfseries Input:}$X_1$, $X_2$ sentence datasets, where $X_1$ is the prefinetuning dataset (on server) and $X_2$ is the FL-finetuning dataset (distributed on clients), $f$ sentence embedder, $\mathcal{C}$ client set, $c$ embedding clipping norm
        \STATE \textbf{\underline{$\triangleright$ Compute mean and covariance of prefinetuning dataset}}
        \STATE Compute $E_1 = f(X_1)$, and $m_1 = \text{mean}(E_1)$, $C_1 = \text{Cov}(E_1)$
        \STATE \textbf{\underline{$\triangleright$ Compute DP mean of client datasets}}
        \STATE Let $\tau_1 = (2 c/n_2) (\sqrt{2 \log (1.25/\delta)}/{\epsilon})$
        \STATE \textbf{Server sends} $f$ to all clients, clients compute $E_2$
        \STATE Clients clip $E_2$ to get $\chi_c(E_2)$ and add $\mathcal{N}(0, \tau_1^2 I_{d \times d})$ to each sample in $\chi_c(E_2)$ for $\mathcal{M}_1(\chi_c(E_2))$
        \STATE \textbf{Server securely aggregates} mean of $\mathcal{M}_1(\chi_c(E_2))$ from the clients to get $\textsf{pmean}_{2}$.
        \STATE \textbf{\underline{$\triangleright$ Compute DP covariance of client datasets}}
        \STATE Let $\tau_2 = (c^2 / n_2) (\sqrt{2 \log (1.25/\delta)}/{\epsilon})$
        \STATE \textbf{Server sends} $\textsf{pmean}_{2}$, clients subtract $\textsf{pmean}_{2}$ from each sample in $\chi_c(E_2)$ for $\Theta(\chi_c(E_2))$
        \STATE Clients clip $\Theta(\chi_c(E_2))$ to get $\chi_c(\Theta(\chi_c(E_2)))$
        \STATE Clients compute their contributions to $\tilde{C}_2 := (1/n_2) \chi_c(\Theta(\chi_c(E_2)))^\top \chi_c(\Theta(\chi_c(E_2)))$ 
        \STATE Clients add $\mathcal{N}(0, \tau_2^2)$ independently to the upper triangle of their contributions of $\tilde{C}_2$, mirror the results to the lower triangle, and get $\mathcal{M}_2(\tilde{C}_2)$
        \STATE \textbf{Server securely aggregates} $\mathcal{M}_2(\tilde{C}_2)$ 
        \STATE Server projects $\mathcal{M}_2(\tilde{C}_2)$ to the nearest PSD matrix for $\textsf{pcov}_2$ 
        \STATE \textbf{\underline{$\triangleright$ Compute \dist{}}}
        \STATE Server computes $\|m_1 - \textsf{pmean}_{2}\|_2^2 + \text{Tr}(C_1 + \textsf{pcov}_2 - 2(C_1 \textsf{pcov}_2)^{1/2})$
    \end{algorithmic}
\end{algorithm}
\section{\dist: Method}
\label{sec:method}

Let $X_1$ and $X_2$ denote language datasets of $n_1$ and $n_2$ sentences, respectively.  
Let $f: S \to \mathbb{R}^{d}$ be a sentence embedder that maps from the space $S$ of sentences to a $d$-dimensional vector. We apply $f$ to each sentence in $X_1$ and $X_2$ to produce $E_1 \in \mathbb{R}^{n_1 \times d}$ and $E_2 \in \mathbb{R}^{n_2 \times d}$.  Then we calculate $m_1, m_2 \in \mathbb{R}^{d}$ to be the row-wise means of $E_1, E_2$ respectively and $C_1, C_2 \in \mathbb{R}^{d \times d}$ to be the row-wise covariances of $E_1$ and $E_2$ respectively. 


The Frechét distance is then computed as in \eqref{eq:frechet}:
\[
 d(X_1, X_2) = \|m_1 - m_2\|_2^2 + \text{Tr}(C_1 + C_2 - 2(C_1 C_2)^{1/2})
\]
Now, let $X_1$ be the prefinetuning dataset and $X_2$ be the FL-finetuning dataset. \cref{algo:distance} describes how we compute the private FL version of the Fréchet Distance, \dist.  At a high level, the algorithm's core is as follows: (1) we first calculate $\textsf{pmean}_2$ from the clients, which is the DP mean from $E_2$ (2) we send the DP mean to clients who then center their embeddings using the DP mean (the embeddings will not be exactly zero mean because the DP mean is not the true mean, but we find this is sufficient), and then calculate the DP covariance from these centered embeddings (3) we use the DP mean and DP covariance of the client data together with the mean and covariance of the prefinetuning dataset to get \dist{}.  We will now justify the scale of the noise we add in \cref{algo:distance}.   
\begin{proposition}
\label{thm:noisescale}
Let $\tau_1 = (2 c/n_2) (\sqrt{2 \log (1.25/\delta)}/{\epsilon})$ and $\tau_2 = (c^2 / n_2) (\sqrt{2 \log (1.25/\delta)}/{\epsilon})$ be the scale of the Gaussian noise added in \cref{algo:distance}.  Then calculating \dist{} as in \cref{algo:distance} satisfies $(2\epsilon, 2\delta)$-DP.
\end{proposition}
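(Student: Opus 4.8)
The plan is to decompose \cref{algo:distance} into two Gaussian mechanisms applied in sequence---one releasing a differentially private mean, one releasing a differentially private covariance---and then invoke sequential (adaptive) composition together with the post-processing immunity of differential privacy. Concretely, I would show (i) the mean release $\textsf{pmean}_{2}$ is $(\epsilon,\delta)$-DP, (ii) the covariance release $\mathcal{M}_2(\tilde C_2)$ is $(\epsilon,\delta)$-DP \emph{for every} possible value of the mean used to center the data, (iii) these compose to $(2\epsilon,2\delta)$-DP, and (iv) the final \dist{} value is a deterministic function of these two private releases together with the public prefinetuning statistics $m_1,C_1$, so it inherits the same guarantee.

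For step (i), the clipping operation $\chi_c$ guarantees every client embedding has $\ell_2$ norm at most $c$. Under the sample-level neighboring relation, changing one record alters the mean of the clipped embeddings by at most $\Delta_1 = 2c/n_2$ in $\ell_2$ norm (one vector of norm $\le c$ is added or removed, and the change in normalization contributes the second factor of $c$). Plugging $\Delta_1$ into the Gaussian mechanism formula $\sigma = \Delta\sqrt{2\log(1.25/\delta)}/\epsilon$ yields the stated scale $\tau_1$, certifying that the released mean is $(\epsilon,\delta)$-DP; here secure aggregation is what lets the server observe only the noised aggregate rather than per-client contributions, so a single application of the Gaussian mechanism suffices.

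Step (ii) is where the main difficulty lies, because the covariance is computed on embeddings that have been centered by the private quantity $\textsf{pmean}_{2}$, so the second mechanism's input depends on the first mechanism's output. I would resolve this via adaptive composition: fix an arbitrary value $\mu$ of the centering vector and argue the covariance mechanism is $(\epsilon,\delta)$-DP uniformly in $\mu$. The crucial observation is that the algorithm re-clips the centered embeddings, so each $v = \chi_c(\chi_c(e)-\mu)$ satisfies $\|v\|_2 \le c$ \emph{regardless of} $\mu$. Hence adding or removing one sample changes $\tilde C_2 = (1/n_2)\sum_i v_i v_i^\top$ by a single rank-one term $(1/n_2)\,vv^\top$ of Frobenius norm $\|v\|_2^2/n_2 \le c^2/n_2$. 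Since the $\ell_2$ norm of the upper-triangular entries is bounded by the Frobenius norm, the sensitivity of the released upper triangle is at most $\Delta_2 = c^2/n_2$, which matches $\tau_2$ under the same formula; mirroring to the lower triangle and projecting onto the PSD cone are post-processing and do not affect privacy.

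Finally, I would combine the pieces: by basic sequential composition of the $(\epsilon,\delta)$-DP mean mechanism with the conditionally $(\epsilon,\delta)$-DP covariance mechanism, the joint release $(\textsf{pmean}_{2},\textsf{pcov}_2)$ is $(2\epsilon,2\delta)$-DP, and the last line of \cref{algo:distance} computes $\|m_1-\textsf{pmean}_{2}\|_2^2 + \mathrm{Tr}(C_1+\textsf{pcov}_2 - 2(C_1\textsf{pcov}_2)^{1/2})$, a deterministic map of the private releases and the public statistics $m_1,C_1$; post-processing immunity then gives the claimed $(2\epsilon,2\delta)$-DP. The step I expect to require the most care is the uniform-in-$\mu$ sensitivity bound in (ii): without the re-clipping step the centered vectors could have arbitrarily large norm depending on $\mu$, so the re-clipping is precisely what makes the adaptive composition go through and keeps the covariance sensitivity at $c^2/n_2$ independent of the realized DP mean.
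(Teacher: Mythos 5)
Your proof is correct and follows essentially the same route as the paper's: calibrate the Gaussian mechanism to sensitivity $2c/n_2$ for the clipped mean and $c^2/n_2$ for the re-clipped, centered covariance, then combine via sequential composition and post-processing. If anything, you are more explicit than the paper on two points it leaves implicit---the adaptive nature of the composition (the covariance mechanism's input depends on the released mean, which you handle with the uniform-in-$\mu$ sensitivity bound enabled by re-clipping) and the direct rank-one Frobenius-norm bound, where the paper instead cites \cite{dwork2014analyze} for the unit-norm case and rescales by $c^2$.
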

\begin{proof}
    From \cite{dwork2014analyze}[Theorem 2], given that the $\ell_2$ sensitivity of the mean of $\chi_c(E_2)$ is $2 c/n_2$, we know that the Gaussian mechanism with noise on the scale of $\tau_1$ maintains $(\epsilon, \delta)$-DP.

    Next, again observe from \cite{dwork2014analyze}[Theorem 2] that for a vector dataset $A$ (and its neighbor $A'$, vectors arranged row-wise), the $\ell_2$ sensitivity of $A^\top A'$ if each row has norm at most $1$ is 
    $
        \|A^\top A - A'^\top A'\|_2 \leq 1.
    $
    We can write $B := \chi_c(\Theta(\chi_c(E_2)))$ as $c A$ for some $A$.  Furthermore, we can also write $B'$ ($B'$ a neighbor of $B$) as $c A'$ where $A'$ is a neighbor of $A$.  Therefore,
    \begin{align}
        \|&B^\top B - B'^\top B'\|_2 = \|(c A)^\top (cA) - (c A')^\top (cA')\|_2 = c^2 \|A^\top A - A'^\top A'\|_2 \leq c^2.
    \end{align}
    Therefore, the Gaussian mechanism with noise on the scale of $\tau_2$ maintains $(\epsilon, \delta)$-DP for the released private covariance $\mathcal{M}_2(\tilde{C}_2)$. 
    By the sequential composition property of $(\epsilon, \delta)$-DP, releasing both the mean and the covariance in this way satisfies $(2\epsilon, 2\delta)$-DP.
\end{proof}

\section{Experiments}
\label{sec:results}



\label{sec:nonprivate-fred}
\begin{table}[t]
\begin{center}
\begin{tabular}{ |p{2cm}| p{2cm}||p{2cm}|p{3cm}|p{2cm}|  }
 \hline
Prefinetune data & FL-finetune data & Non-private \dist{} & $(0.6, 10^{-6})$-DP \dist & Perplexity Reddit-test\\
 \hline
 StackOverflow-train & Reddit-train   & 678.12  &  826.82 & 65.90\\
 \hline
 Wikitext-train & Reddit-train &  877.58 & 954.25 & 67.33 \\
 \hline
\end{tabular}
\end{center}
\caption{The closer dataset to Reddit (StackOverflow) leads to better FL-finetuning performance on Reddit. Moveover, we can identify that StackOverflow is the closer dataset to Reddit even when computing \dist~privately.}
\label{tab:choice}
\end{table}
\subsection{Choosing between two prefinetuning datasets}
In this subsection we study the case where a practitioner is choosing between two existing candidate prefinetuning datasets.
\paragraph{Experimental Setup}
In this subsection we use 3 datasets in our experiments: the StackOverflow language dataset \citep{reddi2020adaptive}, the Reddit language dataset derived from Reddit data released by \url{pushshift.io} \citep{caldas2018leaf}, and the Wikitext dataset \citep{merity2016pointer}.  Here we let StackOverflow-train and Wikitext-train be the possible choices for prefinetuning datasets and Reddit-train be the FL-finetuning dataset.  Performance is evaluated on Reddit-test. 
 All three are freely available open-source.  We use a DistilGPT-2 model \citep{sanh2019distilbert}, initialized with weights prefinetuned on various combinations of our public datasets.  The task is to use DistilGPT-2 to perform  next word prediction.  
The metric we use for this task is perplexity \citep{jelinek1977perplexity}.  
We train on the cross-entropy loss.  

\paragraph{Training Details}
In the prefinetuning stage, the batch size is 16 and we tuned the best learning rate using a Bayesian hyperparameter sweep over the range $[10^{-1}, 10e^{-6}]$ on the SGD optimizer.  We choose the representative prefinetuned model based on its performance on the validation set: i.e. if we train on StackOverflow-train, the choice is based on performance on StackOverflow-val.  We prefinetune for 10 epochs.  For the FL-finetuning stage, we select hyperparameters similarly given the prefinetuning initialization.  We perform the FL-finetuning non-privately.

\paragraph{Results}
In \cref{tab:choice}, the prefinetuning dataset closest to Reddit, StackOverflow, performs the best as the prefinetuning dataset.  Furthermore, when we calculate \dist{} under $(0.6, 10^{-6})$-DP, we can still easily identify which of Wikitext and StackOverflow is closer.  This experiment demonstrates end-to-end our proposed method of prefinetuning dataset customization, at all three steps \cref{fig:customization}.

\subsection{Choosing among a sequence of prefinetuning datasets}
The motivation of this setting is that, because calculating \dist{} \cref{algo:distance} only requires us to query the private user data once (to calculate $\textsf{pmean}_2$ and $\textsf{pcov}_2$), we can generate a sequence of datasets (the dataset generation can be as basic as finding existing public datasets and as sophisticated as generating synthetic data from LLMs) and evaluate their suitability for the FL task for no additional privacy cost after the first \dist{} calculation. Here, we generate a sequence of prefinetuning datasets, which are a mix of the StackOverflow and Wikitext datasets, and we evaluate FL-finetuning performance on a split of StackOverflow.  The goal here is to show that a highly private \dist{} metric (1) predicts the FL-finetuning performance with respect to prefinetuning dataset choice (2) can accurately tell apart prefinetuning datasets that are even very similar.

\paragraph{Experimental Setup}
In this subsection we use two datasets in our experiments: the StackOverflow language dataset \citep{reddi2020adaptive} and the Wikitext dataset \citep{merity2016pointer}.  We split StackOverflow-train into two datasets of equal size: StackOverflow-train1 and StackOverflow-train2. We let our choices of prefinetuning dataset be $Y$\% Stackoverflow-train1 and $100-Y$\% Wikitext-train, where $Y$ can vary between 0 and 100. 
 The overall dataset size is kept constant at 150k sentences.  When FL-finetuning, we FL-finetune on Stackoverflow-train2 (users and data are distributed).  We test on Stackoverflow-test before and after finetuning. We use a DistilGPT-2 model \citep{sanh2019distilbert}, initialized with weights prefinetuned on various combinations of our public datasets.  The task is to use DistilGPT-2 to perform  next word prediction.  The metric we use for this task is perplexity \citep{jelinek1977perplexity}.  We train to minimize the cross-entropy loss.  

\paragraph{Training Details}
In the prefinetuning stage, the batch size is 16 and used learning rate 0.002 with momentum 0.9, on the SGD optimizer.  We prefinetune for 50 epochs.  For FL-finetuning stage, we FL-finetune with noise scale of 1.5 and clipping 0.01, using the Opacus framework \citep{yousefpour2021opacus}.  We train for one epoch, and sample 100 clients per round.  The privacy cost incurred is $(\epsilon=1.26, \delta=10^{-6})$ from this stage.
\begin{table}[t]
\begin{center}
\begin{tabular}{||c c c||} 
 \hline
Y\% & Test perplexity before FL-finetune & Test perplexity after FL-finetune \\ [0.5ex]
 \hline\hline
 10  & 56.18  &  52.77 \\ 
 \hline
 40  & 44.01 & 42.47 \\
 \hline
 70  & 39.35 & 38.52 \\
 \hline
 95   & 37.18 & 36.59 \\  
 \hline  
 99   & 36.90 & 36.37 \\
 \hline
 100  & 36.88 & 36.33 \\ [1ex] 
 \hline
\end{tabular}
\end{center}
\caption{Test perplexity before and after DP FL-finetuning, as a function of Y\%, the percent of the prefinetuning dataset that is composed of StackOverflow-train1.  We observe that as Y\% increases, the better our model performs after prefinetuning (but before FL-finetuning) and also after FL-finetuning.  We also observe that the improvement from doing FL-finetuning on top of prefinetuning decreases as Y\% increases.}
\label{tab:privatefinetune}
\end{table}
\paragraph{Results}
First, in \cref{tab:privatefinetune}, we see that a closer dataset (as $Y$ increases, the closer our prefinetuning dataset) the better the test perplexity before and after finetune.  Furthermore, the gain we get from FL-finetuning over only prefinetuning decreases as $Y$ increases.  Next, we observe \cref{fig:distmonotone} that both non-private \dist{} and private $(\epsilon=0.6, \delta=2 \times 10^{-6})$ \dist{} corresponds strongly with $Y$, showing that even under strong privacy requirements, \dist{} still gives high-resolution information about the comparative closeness between two choices of prefinetuning datasets. In particular, in \cref{fig:distmonotone} (right), we see that evan for prefinetuning datasets that are only 1\% apart, it is possible to distinguish then using $(\epsilon=0.6, \delta=2 \times 10^{-6})$ \dist{} with high confidence.  This experiment shows another end-to-end example of our new proposed prefinetuning customization process \cref{fig:customization}, demonstrating that by using \dist{}, customization can be highly accurate.

\begin{figure}%
    \centering
   {\includegraphics[width=6cm]{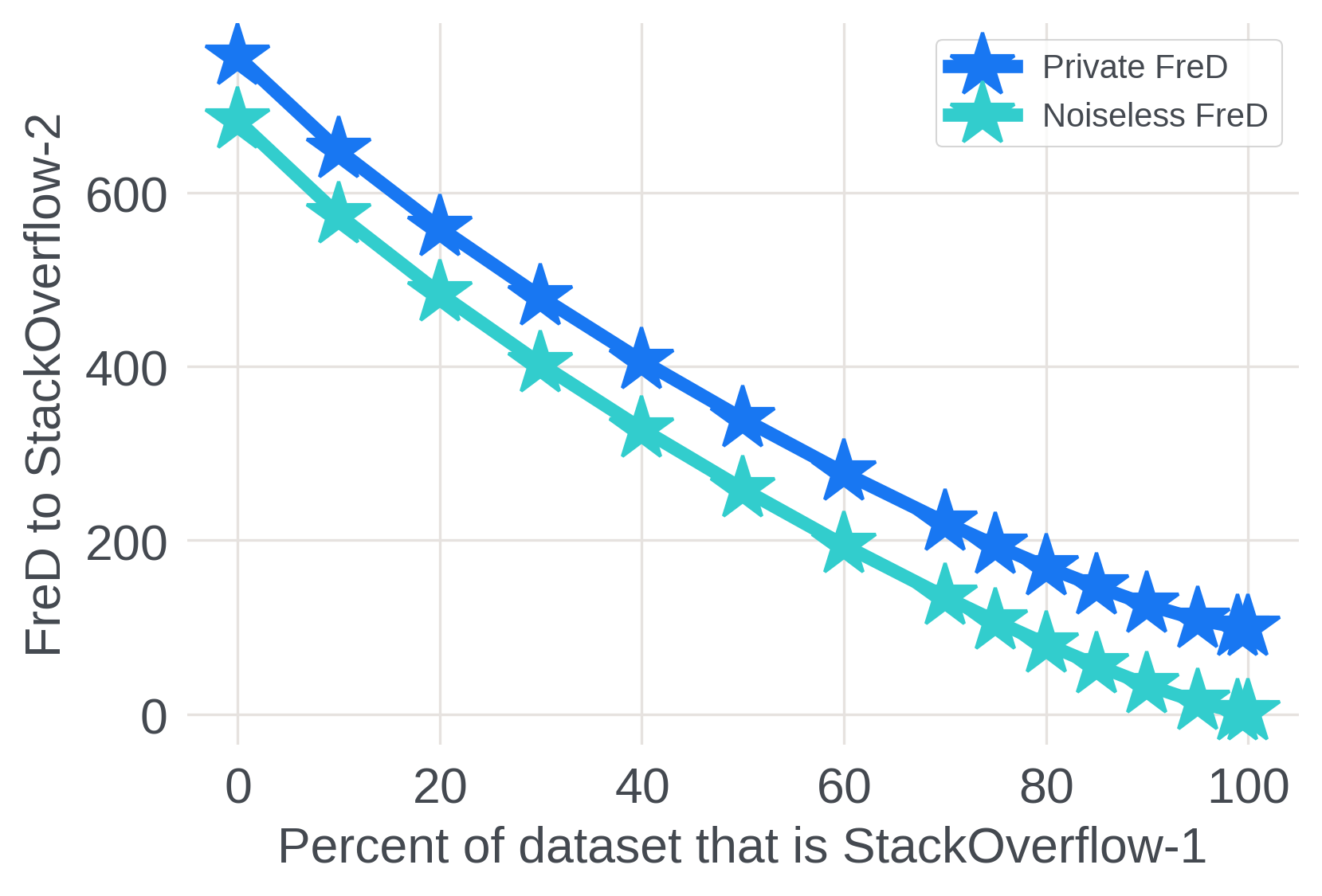} }%
    \qquad
    {\includegraphics[width=6cm]{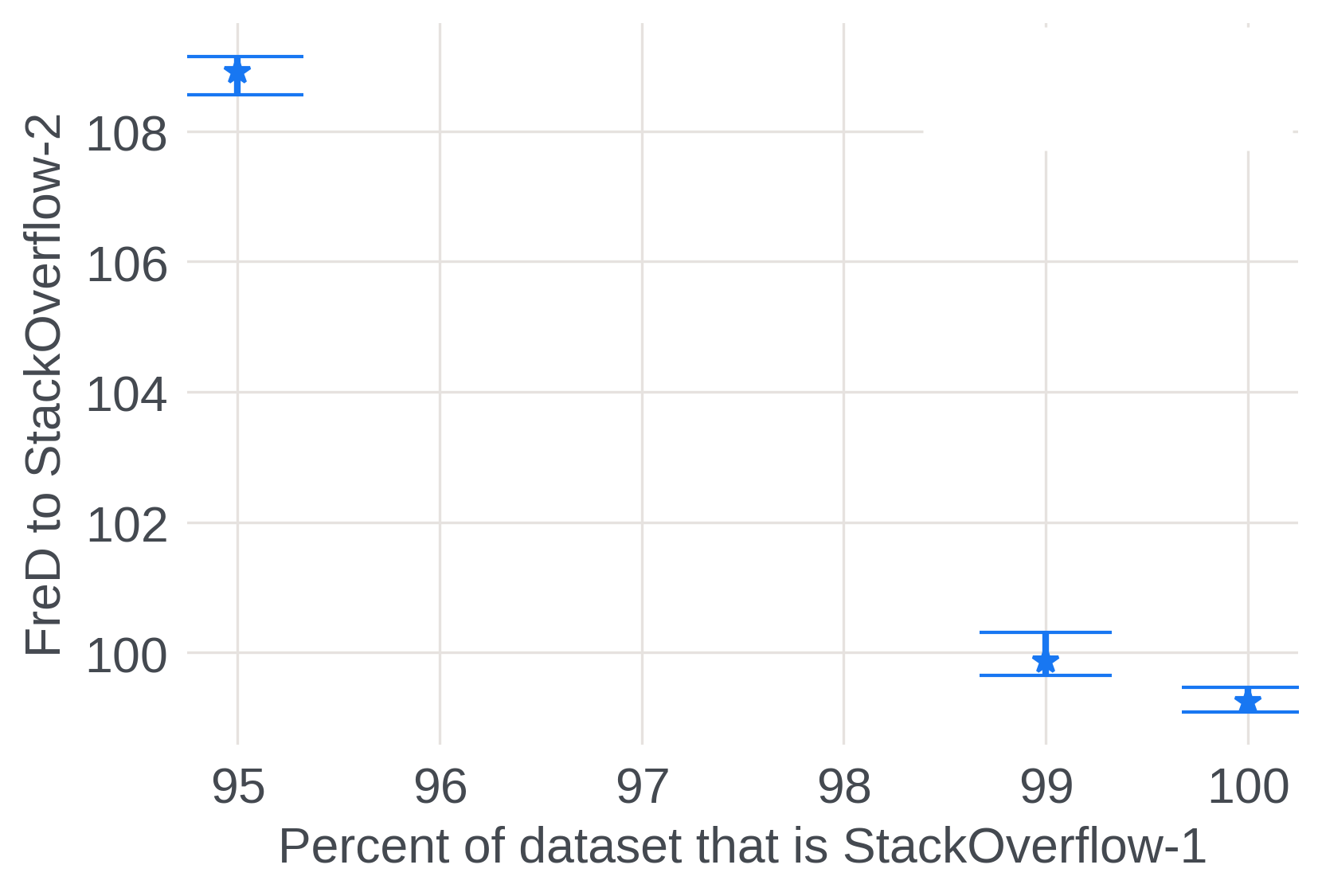} }%
    \caption{Left: We see that both private $(\epsilon=0.6, \delta=2\times 10^{-6})$ and non-private \dist{} monotonically decrease with the percentage of prefinetuning dataset that is StackOverflow-train1.  Right: We see that even between datasets that are 1\% apart, private \dist{} over 5 trials has nonoverlapping error bars--i.e. the highest observed \dist{} value for 100\% StackOverflow-train1 is still lower than the lowest observed \dist{} value for 99\% StackOverflow-train1.  This demonstrates that private \dist{} can distinguish between highly similar datasets with confidence.}%
    \label{fig:distmonotone}%
\end{figure}

\section{Conclusion}

In this paper, we make the case for using \dist{} as an informative metric for prefinetuning dataset choice.  Our experiments show that \dist{} is a good indicator of the quality of the prefinetuning dataset for an FL task.  By demonstrating this, we show that we can use \dist{} to privately customize our prefinetuning to match user data, which improves FL model performance.  Altogether, we demonstrate the first proof-of-concept of a new approach in privacy-sensitive FL applications: customization of the prefinetuning dataset for better FL model performance.  In the future, we plan to introduce more powerful prefinetuning dataset generation strategies to augment the power of our approach.  

\bibliography{iclr2023_conference}
\bibliographystyle{iclr2023_conference}

\appendix

\end{document}